\documentclass{article}
\usepackage{amsmath,graphicx,mlspconf}
\usepackage{xcolor}

\usepackage{amsmath, amssymb,amsthm}
\newtheorem{proposition}{Proposition}
\usepackage{tikz}
\usetikzlibrary{arrows.meta, positioning, fit}
\usetikzlibrary{decorations.pathreplacing}
\usepackage{placeins}
\copyrightnotice{979-8-3195-0884-3/26/\$31.00 {\copyright}2026 IEEE}

\toappear{2026 IEEE International Workshop on Machine Learning for Signal Processing, Sep.\ 28-- Oct.\ 1, 2026, Atlanta, USA}

\title{Time-Warping Estimation via Stationarity-Based Learning \\ of the De-Warped Signal}
\name{%
    Corentin Presvôts%
    \qquad Adrien Meynard%
}
\address{%
    CNRS, ENS de Lyon, LPENSL, UMR 5672, Lyon, France
}

\begin{document}
%\ninept

\maketitle

\begin{abstract}
Time-warping estimation is a fundamental problem in signal processing with applications in bioacoustics, radar, and biomedical analysis. This paper introduces a Time-Warping Estimation Trainable (TWET) model for estimating time-warping functions from a single observation. The proposed approach formulates time-warping estimation as a stationarization problem in the wavelet domain and leverages a hierarchical dilated convolutional architecture to estimate the time-warping functions. A differentiable stationarity criterion is introduced for end-to-end optimization. TWET is compared with existing approaches. Experimental results show improved deformation reconstruction accuracy together with significantly reduced computation time, making the framework compatible with low-latency applications.
\end{abstract}
\begin{keywords}
Time-warping estimation,
wavelet transform,
non-stationary signals,
convolutional neural networks,
inverse problems.
\end{keywords}

\newcommand{\cem}[1]{\textcolor{blue}{cem: #1}}

\section{Introduction}
\label{sec:introduction}

Time-warping phenomena arise in many real-world signals and constitute a major source of non-stationarity. Such deformations result from propagation effects (e.g., Doppler shifts), medium variability, or intrinsic physiological dynamics~\cite{priestley1988non}. They are encountered in a wide range of applications, including bioacoustics, where they affect the temporal evolution of animal vocalizations~\cite{Ioana2004use}, radar and sonar, where Doppler-induced distortions reflect relative motion between emitter and receiver~\cite{Smith2010Radar}, and biomedical signals such as cardiac recordings, where physiological variability requires temporal alignment for reliable analysis~\cite{Tuzcu2005Dynamic}. Estimating these deformations is therefore a key problem, as it enables the recovery of intrinsic stationary structure and improves downstream tasks such as denoising~\cite{souriau2022fetal} or classification~\cite{liu2024novel}.

To address this issue, the observed signal can be interpreted as a time-reparameterized version of a latent stationary process, where local variations along the time axis encode the deformation~\cite{Omer2017Time}. Estimating this time-warping function from a single observation remains a challenging inverse problem, and most existing approaches rely either on multiple realizations or on strong structural assumptions~\cite{chithra2025matched}. In this context, time–scale representations, in particular wavelet transforms, provide a natural framework, as time-warping induces interpretable distortions in the wavelet domain. Under suitable regularity assumptions, these effects can be approximated as local translations along the scale axis~\cite{Omer2017Time}. This property has motivated a variety of estimation strategies, including transport-based methods~\cite{Clerc2003Estimating}, energy distribution approaches~\cite{Omer2017Time}, and maximum-likelihood formulations such as the Joint Estimation of Frequency, Amplitude and Spectrum (JEFAS)~\cite{Meynard2018Spectral,Meynard2023Synthesis}.

Despite these advances, existing approaches still present several limitations. In particular, they may struggle to accurately estimate both slowly and rapidly varying time-warping functions,  their performance depends heavily on carefully tuned hyperparameters, and their computational complexity often prevents their use in low-latency applications.

To address these limitations, a trainable scheme for time-warping estimation, referred to as TWET, is proposed. The method operates on time--scale representations and formulates time-warping estimation as a stationarization problem in the wavelet domain. It relies on a hierarchical dilated convolutional architecture to estimate the time-warping function from a single observation. A key component of the approach is a differentiable stationarity criterion, which provides a task-driven objective function and enables end-to-end optimization. The formulation can be optimized on a signal-by-signal basis, as in existing signal-specific approaches, or trained on representative data to learn a deformation model for signals from the same application domain.

The remainder of this paper is organized as follows. Section~\ref{sec:Framework} formalizes the inverse problem. Section~\ref{sec:Proposed Method} presents the proposed approach. Section~\ref{sec:Results} reports the experimental results, and Section~\ref{sec:Conclusion} concludes the paper.

\section{Framework}
\label{sec:Framework}

This section introduces the time-warping estimation problem. Section~\ref{subsec:Modeling Time-Warped Signals} presents the deformation model, Section~\ref{subsec:Time-Warping as Scale Translations} describes its effect in the time-scale domain, and Section~\ref{subsec:Existing Estimation Methods} reviews existing methods.

\subsection{Modeling Time-Warped Signals}
\label{subsec:Modeling Time-Warped Signals}

Let $y(t)$ denote an observed real-valued continuous-time signal, defined for $t\in\mathbb{R}$. The signal is assumed to result from a time-warped version of an underlying latent signal $x(t)$. The time-warping function, denoted $\gamma(t)$, is assumed to be continuously differentiable and strictly increasing. The resulting model is given by
\begin{equation}
\label{eq:y(t)}
    y(t)=x(\gamma(t)).
\end{equation}
It is further assumed that $x(t)$ is modeled as a realization of a zero-mean, wide-sense stationary stochastic process, \textit{i.e.},
\begin{align}
    \mathbb{E}\left[x\left(t\right)\right]&=0,\quad\forall t,\in\mathbb{R}\\
\mathbb{E}\left[x\left(t\right)x\left(t+\tau\right)\right]&=\mathbb{E}\left[x\left(0\right)x(\tau)\right],\quad\forall t,\tau\in\mathbb{R}.
\end{align}

The stationarity assumption implies that the statistical structure of $x(t)$ is time-invariant, so that any observed nonstationarity observed in $y(t)$ can be attributed solely to the deformation $\gamma(t)$. 

The objective is therefore to estimate $\gamma(t)$ from the sole observation of $y(t)$. Equivalently, this problem can be interpreted as a stationarization task, in which a deformation $\gamma(t)$ is sought such that
\begin{equation}
\label{eq:dewarping}
    x(t)=y(\gamma^{-1}(t)).
\end{equation}

Direct estimation of the deformation $\gamma(t)$ in the time domain is challenging due to the unknown structure of the latent signal $x(t)$. To overcome this difficulty, the problem is reformulated in a time-scale representation, where time-warping can be expressed in a more structured form.

\subsection{Time-Warping as Scale Translations}
\label{subsec:Time-Warping as Scale Translations}

The estimation of time-warped signals can be carried out naturally in a time-scale representation. In particular, it has been shown that smooth time-warping induces a simple and structured transformation of the wavelet coefficients, which makes this domain especially suitable for time-warping function estimation.

To this end, the continuous wavelet transform is considered. Let $\psi(t)\in L^{2}(\mathbb{R})$ denote a mother wavelet. The wavelet coefficients of $y(t)$ are defined as
\begin{equation}
    W_\mathrm{y}(s,u)=q^{-\frac{s}{2}}\int_{\mathbb{R}}y(t)\psi^*\left(\frac{t-u}{q^{s}}\right)\mathrm{d}t, \quad q>1,
\end{equation}    
where $s>0$ denotes the scale parameter and $u\in\mathbb{R}$ the time variable.

Under suitable regularity assumptions on $\gamma(t)$ and on the chosen wavelet,  it has been shown in~\cite{Omer2017Time} that the wavelet coefficients of $y(t)$ satisfy the following approximation.

\begin{proposition}
\label{prop:approx}
Assume that $x(t)$ is a wide-sense stationary process and that $\gamma(t)\in\mathcal{C}^{1}$ with slowly varying derivative. Then, for sufficiently regular wavelets, the wavelet transforms of $x(t)$ and $y(t)=x(\gamma(t))$ are related by
\begin{equation}
    \label{eq:approx}
    W_\mathrm{y}(s,u)\approx W_{x}(s+\delta(u),\gamma(u)),
\end{equation}
where
\begin{equation}
    \delta(u)=\log_{q}\big(\gamma'(u)\big).
\end{equation}
The approximation error in~\eqref{eq:approx} is controlled, among other factors, by $\|\gamma''\|_\infty = \sup_t |\gamma''(t)|$.
\end{proposition}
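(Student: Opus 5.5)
The plan is to substitute $t=\gamma^{-1}(v)$ in the definition of $W_\mathrm{y}(s,u)$ and then Taylor-expand $\gamma^{-1}$ around the point $\gamma(u)$. Since $\gamma$ is $\mathcal{C}^1$ and strictly increasing, $\gamma^{-1}$ exists and is differentiable with $(\gamma^{-1})'(v)=1/\gamma'(\gamma^{-1}(v))$. The substitution gives
\begin{equation*}
W_\mathrm{y}(s,u)=q^{-\frac{s}{2}}\int_{\mathbb{R}} x(v)\,\psi^*\!\left(\frac{\gamma^{-1}(v)-u}{q^{s}}\right)\frac{\mathrm{d}v}{\gamma'(\gamma^{-1}(v))}.
\end{equation*}

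Next I linearize $\gamma^{-1}$ around $\gamma(u)$:
\begin{equation*}
\gamma^{-1}(v)=u+\frac{v-\gamma(u)}{\gamma'(u)}+r_u(v),
\end{equation*}
where the remainder satisfies $|r_u(v)|\le C\,\|\gamma''\|_\infty\,(v-\gamma(u))^2/\inf(\gamma')^3$. In the same way, $1/\gamma'(\gamma^{-1}(v))=1/\gamma'(u)+O(\|\gamma''\|_\infty|v-\gamma(u)|)$. Keeping only the leading terms gives the main term
\begin{equation*}
\frac{q^{-\frac{s}{2}}}{\gamma'(u)}\int_{\mathbb{R}} x(v)\,\psi^*\!\left(\frac{v-\gamma(u)}{q^{s}\gamma'(u)}\right)\mathrm{d}v .
\end{equation*}
Setting $q^{s+\delta(u)}=q^s\gamma'(u)$, that is $\delta(u)=\log_q\gamma'(u)$, this becomes $q^{-\frac{s}{2}}q^{-\delta(u)}\,q^{\frac{s+\delta(u)}{2}}\,W_x(s+\delta(u),\gamma(u))$, which equals $q^{-\delta(u)/2}\,W_x(s+\delta(u),\gamma(u))$. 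So the main term is exactly the stated relation~\eqref{eq:approx}, up to the amplitude factor $\gamma'(u)^{-1/2}$. I would note explicitly that this factor is absorbed in the "$\approx$" convention of~\cite{Omer2017Time}. Alternatively, it disappears if one models $y(t)=\sqrt{\gamma'(t)}\,x(\gamma(t))$, or it can be treated as a known local amplitude modulation. Either way the scale-shift structure holds.

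The main obstacle is controlling the error term, because $x$ is a stationary random process rather than an $L^2$ function. I would therefore bound the error in mean square. Write the error as $\int x(v)\,e_u(v)\,\mathrm{d}v$, where $e_u$ is the difference between the exact and linearized kernels. Then
\begin{equation*}
\mathbb{E}\Big|\int x\,e_u\Big|^2=\int S_x(\xi)\,|\hat e_u(\xi)|^2\,\mathrm{d}\xi\le\|S_x\|_\infty\|e_u\|_2^2
\end{equation*}
when the spectral density $S_x$ is bounded. To bound $\|e_u\|_2$, I would apply the mean value theorem to $\psi$:
\begin{equation*}
|e_u(v)|\lesssim q^{-\frac{3s}{2}}\,\|\psi'\text{-envelope}\|\,|r_u(v)|+\dots
\end{equation*}
I would take $\psi$ and $\psi'$ to be well localized, with $|\psi'(t)|\le C(1+|t|)^{-k}$ for large enough $k$. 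Integrating $(v-\gamma(u))^4$ against this decay over the effective support, which has width about $q^s\gamma'(u)$, gives
\begin{equation*}
\|e_u\|_2\lesssim \|\gamma''\|_\infty\, q^{s}\,\frac{\sup\gamma'}{\inf(\gamma')^{2}}\cdot \bigl(\text{moments of }\psi,\psi'\bigr).
\end{equation*}
This shows that the error vanishes as $\|\gamma''\|_\infty\to 0$, and that it grows with the scale $q^s$ and with the wavelet's time spread. These are the "other factors" mentioned in the statement.

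The delicate point is that the Taylor remainder is only uniform where $\gamma''$ is bounded. Far from $\gamma(u)$ the linearization is poor, so I would split the integral into $|v-\gamma(u)|\le R\,q^s$ and its complement. On the near region I use the Taylor bound above. On the complement I use the decay of $\psi$, which makes the contribution of $O(R^{-k+1})$, and then I optimize over $R$.
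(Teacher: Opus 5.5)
Your leading-term computation is correct, and it exposes a defect in the statement rather than a matter of convention. With $y=x\circ\gamma$ the leading term is $\gamma'(u)^{-1/2}\,W_x(s+\delta(u),\gamma(u))$. That factor cannot be ``absorbed in the $\approx$'' of an error controlled by $\|\gamma''\|_\infty$: for $\gamma(t)=2t$ one has exactly $W_y(s,u)=2^{-1/2}W_x(s+\log_q 2,\,2u)$, although $\gamma''\equiv 0$. The paper's only proof is a citation, and the cited result is formulated for the unitary warping $y=\sqrt{\gamma'}\,(x\circ\gamma)$, where the factor does not appear. So state plainly what you prove: the proposition either for that model, or with $\gamma'(u)^{-1/2}$ on the right-hand side. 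Make the hypotheses explicit as well: $\gamma\in\mathcal{C}^2$ with $0<\inf\gamma'\le\sup\gamma'<\infty$, which is what ``slowly varying derivative'' and $\|\gamma''\|_\infty$ tacitly require.

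With that correction your argument is sound, but it is organized differently from the cited proof. As I understand the cited argument, it stays in the original time variable. It replaces $\gamma$ by its tangent $\gamma(u)+\gamma'(u)(t-u)$, and $\sqrt{\gamma'(t)}$ by $\sqrt{\gamma'(u)}$, against the unchanged wavelet. It then bounds the mean-square remainder through the spectral representation of $x$, using pointwise bounds on the phase error. Because that phase error is multiplied by the frequency, the cost is moments of the spectrum of $x$ and of $|\psi|$.

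You instead push the warp onto the wavelet via $v=\gamma(t)$. The error is then $\varepsilon=\int x\,e_u$ for a deterministic kernel $e_u$, and a single Plancherel step gives $\mathbb{E}|\varepsilon|^2\le\|S_x\|_\infty\|e_u\|_2^2$ with $\|e_u\|_2=O(q^s\|\gamma''\|_\infty)$. The trade-off in hypotheses is this. Yours needs no smoothness of $x$ and fits the Hann-shaped narrowband processes used in the experiments, but it excludes line spectra (pure tones). A moment-type hypothesis tolerates line spectra but requires spectral decay.

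Your ``delicate point'' is not actually delicate, and the splitting should be dropped. Since $\|\gamma''\|_\infty$ is a global bound, $|r_u(v)|\le\|\gamma''\|_\infty (v-\gamma(u))^2/(2\inf(\gamma')^3)$ holds for every $v$. The two arguments $a(v)=(\gamma^{-1}(v)-u)/q^s$ and $b(v)=(v-\gamma(u))/(q^s\gamma'(u))$ have the same sign, and both have modulus at least $|v-\gamma(u)|/(q^s\sup\gamma')$. The envelope $C(1+|t|)^{-k}$ of $\psi'$ therefore applies on the whole segment between them. Then $\int w^4(1+|w|/(q^s\sup\gamma'))^{-2k}\,\mathrm{d}w<\infty$ for $k>5/2$ closes the estimate directly. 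Cutting at $Rq^s$ and optimizing in $R$ adds nothing, and with a crude near-field bound it would only give a fractional power of $\|\gamma''\|_\infty$. The Jacobian term you left as ``$\dots$'' is handled the same way, using $\int w^2|\psi(w)|^2\,\mathrm{d}w<\infty$.
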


\begin{proof}
Given in~\cite{Omer2017Time}.
\end{proof}

This result plays a central role in the analysis of nonstationary signals. Since the latent signal $x(t)$ is assumed to be stationary, its wavelet transform is statistically invariant with respect to time. In contrast, time-warping manifests itself in the wavelet domain as a translation along the scale axis. Consequently, estimating and compensating for $\delta(u)$ allows the restoration of stationarity in the time-scale representation. This property directly motivates a large class of estimation methods that aim at recovering the deformation from observed wavelet coefficients.

\subsection{Existing Estimation Methods}
\label{subsec:Existing Estimation Methods}

Several methods in the literature exploit the approximation result of Proposition~\ref{prop:approx} to estimate the deformation function $\delta(u)$ from the wavelet transform. Once $\delta(u)$ is estimated, the derivative of the time-warping function is recovered as $\gamma'(u)=q^{\delta(u)}$, and $\gamma(u)$ is obtained, up to an additive constant, by integration.

In~\cite{Clerc2003Estimating}, the authors exploit a local relationship between the partial derivatives of the scalogram (\textit{i.e.}, the squared modulus of the wavelet transform) to estimate the deformation. This leads to the estimator
\begin{equation}
    \widehat{\delta}'\left(u\right)=\lim_{s\rightarrow 0}\left(\frac{\partial_{u}\left|W_\mathrm{y}(s,u)\right|^{2}}{\partial_{s}\left|W_\mathrm{y}(s,u)\right|^{2}}\right).
\end{equation}
In practice, this limit is approximated using small scales. Nevertheless, the lack of signal energy at fine scales and the sensitivity of numerical differentiation to noise significantly limit the robustness of this approach.

An alternative approach, which is referred to as the Wavelet Scale Barycenter (WSB) method~\cite{Omer2017Time}, relies on a barycentric aggregation across scales
\begin{equation}
    \widehat{\delta}\left(u\right)=\frac{\int_{\mathbb{R}_+}q^{s}\left|W_\mathrm{y}(s,u)\right|^{2}\mathrm{d}s}{\int_{\mathbb{R}_+}\left|W_\mathrm{y}(s,u)\right|^{2}\mathrm{d}s}.
\end{equation}
In contrast to derivative-based methods, WSB avoids explicit differentiation but does not explicitly account for temporal consistency of the wavelet energy distribution, which typically leads to high estimation variance.

More recently, maximum likelihood formulations have been introduced within the JEFAS framework~\cite{Meynard2018Spectral}. Based on the approximation in~\eqref{eq:approx}, the likelihood of the wavelet coefficients is expressed in terms of both the deformation $\delta(u)$ and the power spectrum of the latent stationary signal $x(t)$. Estimation is performed iteratively by alternating between updating $\delta(u)$ and the spectral parameters.

JEFAS relies strongly on the validity of the approximation in~\eqref{eq:approx}, which may deteriorate when $\gamma'(t)$ varies significantly within the support of the wavelet. To improve robustness to fast variations, JEFAS-S~\cite{Meynard2023Synthesis} adopts a Bayesian formulation. Nevertheless, its computational complexity becomes prohibitive for long signals (typically beyond 1024 samples), limiting its applicability in practice.

\section{Proposed Method}
\label{sec:Proposed Method}

A trainable scheme for time-warping estimation, referred to as TWET, is proposed to address limitations of existing approaches, including sensitivity to noise and rapidly varying deformations, as well as computational cost.

Section~\ref{subsec:Discretization and Segmentation} describes the signal discretization and segmentation strategy, Section~\ref{subsec:Inverse Problem} formulates the inverse problem in the time-scale domain, and Section~\ref{subsec:Proposed Time-Warping Estimation Trainable Model} presents the proposed neural architecture.

\subsection{Discretization and Segmentation}
\label{subsec:Discretization and Segmentation}

The observed signal $y(t)$ is sampled uniformly with sampling period $T_{\mathrm{s}} = 1/F_{\mathrm{s}}$, yielding $y_n = y(nT_{\mathrm{s}})$, where $n \in \mathbb{Z}$.

Time-warping estimation is performed on short-time segments of fixed length $N$ to reduce latency and computational cost. The signal is thus partitioned as
\begin{equation}
\boldsymbol{y}_{i} = \left( y_{iN}, \dots, y_{(i+1)N-1} \right)^{T}, \quad i \in \mathbb{Z}.
\end{equation}
The parameter $N$ controls the trade-off between temporal context and latency. The segment index is omitted in the remainder of the paper when no ambiguity arises.

The time-scale representation is computed using a discretized wavelet transform on a linear scale grid $s_m$, $m = 0,\dots,M-1$, and a uniform time grid $u_n = nT_{\mathrm{s}}$, $n = 0,\dots,N-1$, with $q>1$. The coefficients are defined as
\begin{equation}
W_{\mathrm{y},m,n} = W_\mathrm{y}(s_m,u_n),
\end{equation}
forming a matrix $\boldsymbol{W}_\mathrm{y} \in \mathbb{C}^{M \times N}$.

The same discrete notation is used for all estimated and reconstructed quantities, including the deformation derivative $\boldsymbol{\gamma}'$, the scale-shift parameter $\boldsymbol{\delta}$, the reconstructed signal $\boldsymbol{x}$, and its associated wavelet transform $\boldsymbol{W}_\mathrm{x}$.

\subsection{Inverse Problem Formulation}
\label{subsec:Inverse Problem}

The objective is to estimate a deformation operator acting on the wavelet representation of the observed signal such that the resulting representation matches that of an underlying stationary process. According to Proposition~\ref{prop:approx}, time-warping induces approximate translations along the scale axis in the time-scale domain.

Since the values of $\boldsymbol{\delta}$ are not restricted to integers, a direct shift of discrete wavelet coefficients is not well-defined. The de-warping operator is therefore defined through interpolation along the scale axis. For a given deformation $\boldsymbol{\delta}$, the de-warped wavelet coefficient at location $(m,n)$ is defined as
\begin{equation}
\label{eq:W_y_delta}
W^{(\boldsymbol{\delta})}_{\mathrm{y},m,n}
=
\mathcal{I}_{s}\!\left( W_{\mathrm{y}}(\cdot,u_n) \right)
\Big|_{s = s_m-\delta_n},
\end{equation}
where $\mathcal{I}_{s}$ denotes an interpolation operator along the scale axis. The wavelet representation is evaluated on the linear scale grid indexed by $m = 0,\dots,M-1$ and on the discrete time samples $n = 0,\dots,N-1$. The resulting de-warped wavelet representation is denoted by $\boldsymbol{W}^{(\boldsymbol{\delta})}_{\mathrm{y}}$.

Under correct compensation of the deformation, Proposition~\ref{prop:approx} implies that
\begin{equation}
    \boldsymbol{W}_\mathrm{x} \approx \boldsymbol{W}^{(\boldsymbol{\delta})}_\mathrm{y},
\end{equation}
so that the de-warped coefficients are expected to exhibit stationarity along the time axis.

Following~\cite{borgnat2010Testing}, a differentiable stationarity criterion is defined as the average temporal variance $\mathrm{Var}_m(\cdot)$ of the de-warped coefficients along the time index $n$ at each scale $m$
\begin{equation}
\label{eq:stat_score}
    \mathcal{T}(\boldsymbol{W}^{(\boldsymbol{\delta})}_\mathrm{y})
    =
    \frac{1}{M}
    \sum_{m=0}^{M-1}
    \mathrm{Var}_{m}\!\left( \boldsymbol{W}^{(\boldsymbol{\delta})}_\mathrm{y}\right).
\end{equation}
This criterion measures the temporal variability of the wavelet coefficients at each scale. For a Wide-Sense Stationary (WSS) process, their second-order statistics are time-invariant. Therefore, \eqref{eq:stat_score} uses their temporal variance as a tractable surrogate for stationarity, rather than explicitly enforcing WSS or estimating the full autocorrelation function. The objective is to estimate the time-warping function that minimizes this variability. Only the modulus of the wavelet coefficients is retained, as motivated by Proposition~\ref{prop:approx}, which provides an explicit relation for the wavelet modulus under time warping.

The deformation estimation is then formulated as the following regularized inverse problem
\begin{equation}
    \mathcal{L}(\boldsymbol{\delta})
    =
    \mathcal{T}(\boldsymbol{W}^{(\boldsymbol{\delta})}_\mathrm{y})
    +
    \lambda \left\| \nabla \boldsymbol{\delta} \right\|_2^2,
    \quad \lambda > 0,
\end{equation}
where $\nabla \boldsymbol{\delta}$ denotes the discrete temporal gradient of $\boldsymbol{\delta}$. The regularization term enforces temporal smoothness of the estimated deformation, consistent with the assumption that $\gamma'(t)$ varies slowly over time.

The optimal deformation is defined as
\begin{equation}
\label{eq:opt_problem}
    \widehat{\boldsymbol{\delta}}
    =
    \arg\min_{\boldsymbol{\delta}} \mathcal{L}(\boldsymbol{\delta}).
\end{equation}
Nevertheless, solving~\eqref{eq:opt_problem} is challenging due to the strong non-linearity of the warping operator and the ill-posed nature of the problem in the presence of noise. In addition, the resulting optimization landscape is highly non-convex, which makes classical numerical solvers sensitive to initialization and prone to suboptimal local minima.

These limitations motivate a reformulation of the estimation problem within a parametric and data-driven framework, where the deformation is predicted by a trainable model.

\subsection{Proposed Time-Warping Estimation Trainable Model}
\label{subsec:Proposed Time-Warping Estimation Trainable Model}

To improve numerical stability and reduce sensitivity to amplitude variations, a channel-wise normalization is applied to the modulus of the wavelet coefficients $| \boldsymbol{W}_{\mathrm{y}} |$. The input is standardized by subtracting its mean and dividing by its standard deviation, yielding the normalized representation $\boldsymbol{X}^{(0)}$.

\begin{figure}[h!]
\centering
\begin{tikzpicture}
    % Image
    \node[anchor=south west, inner sep=0] (img) at (0,0)
    {\includegraphics[width=\columnwidth]{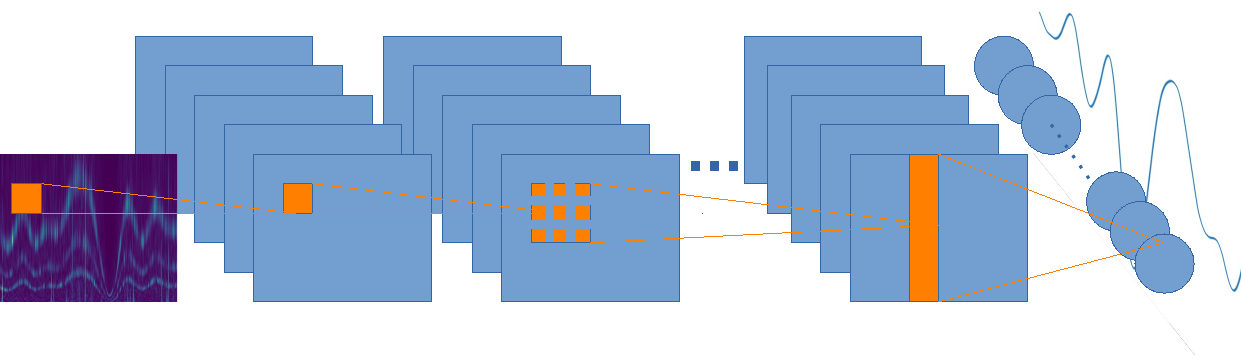}};

    % Coordinate system
    \begin{scope}[x={(img.south east)}, y={(img.north west)}]

        \node at (0.5cm,2.5cm) {$\boldsymbol{X}^{(0)}$};
        \node at (1.7cm,2.5cm) {$\boldsymbol{X}^{(1)}$};
        \node at (3.5cm,2.5cm) {$\boldsymbol{X}^{(2)}$};
        
        \node at (4.8cm,2.5cm) {$\cdots$};

        \node at (6cm,2.5cm) {$\boldsymbol{X}^{(L)}$};

        \node at (7.6cm,2.455cm) {$\boldsymbol{\delta}(\boldsymbol{\Phi)}$};

        \draw[decorate, decoration={brace, amplitude=5pt, mirror}]
            (1.5cm,0.2cm) -- (5cm,0.2cm)
            node[midway,yshift=-0.6cm,align=center] {Dilated convolutions \\ feature extraction};
        \draw[decorate, decoration={brace, amplitude=5pt, mirror}]
            (5.5cm,0.2cm) -- (8cm,0.2cm)
            node[midway,yshift=-0.6cm,align=center] {Convolutional projection};        
    \end{scope}

\end{tikzpicture}
\caption{Architectural overview of the proposed Time-Warping Estimation Trainable (TWET) model for estimating the time-warping derivative $\boldsymbol{\gamma}'$.}
\label{fig:schema_bloc}
\end{figure}

Fig.~\ref{fig:schema_bloc} illustrates the proposed hierarchical convolutional architecture for estimating the deformation from the time-scale representation. The model processes $\boldsymbol{X}^{(0)} \in \mathbb{R}^{C \times M \times N}$ through a sequence of $L$ convolutional blocks.

At layer $\ell = 0,\dots,L-1$, the feature map $\boldsymbol{X}^{(\ell)} \in \mathbb{R}^{C \times M \times N}$ is processed using $C$ dilated 2D convolutions with kernel size $(k_m,k_n)$, where $k_m$ and $k_n$ are odd integers, chosen small compared to $M$ and $N$, respectively, and a dilation factor $2^{\ell}$. The output of the $c$-th convolution is denoted by $\boldsymbol{Z}^{(\ell)}_{c}$, with components given by
\begin{equation}
Z^{(\ell)}_{c,m,n}
=
b^{(\ell)}_{c}
+
\sum_{k=1}^{C}
\sum_{(i,j)\in \mathcal{K}}
w^{(\ell)}_{c,k,i,j}
\, X^{(\ell)}_{k, m + 2^{\ell} i, n + 2^{\ell} j },
\end{equation}
where
$
\mathcal{K}
=
\left\{
-\frac{k_m-1}{2},\dots,\frac{k_m-1}{2}
\right\}
\times
\left\{
-\frac{k_n-1}{2},\dots,\frac{k_n-1}{2}
\right\},
$
and where $w^{(\ell)}_{c,k,i,j}$ and $b^{(\ell)}_{c}$
denote the learnable convolution weights and biases. The $C$ feature maps are concatenated as
\begin{equation}
\boldsymbol{Z}^{(\ell)} =
\left(\boldsymbol{Z}^{(\ell)}_{1}, \dots, \boldsymbol{Z}^{(\ell)}_{C}\right),
\end{equation}
and the output of the $\ell+1$-th layer with $\ell=0,\dots,L-1$, is given by the embedded in a residual structure
\begin{equation}
\boldsymbol{X}^{(\ell+1)} =
\boldsymbol{X}^{(\ell)} +
\phi\!\left(\mathrm{LN}^{(\ell)}(\boldsymbol{Z}^{(\ell)})\right),
\end{equation}
where $\phi(\cdot)$ is a nonlinear activation function, namely the Gaussian Error Linear Unit (GELU)~\cite{hendrycks2016gaussian}, and $\mathrm{LN}^{(\ell)}$ denotes layer normalization~\cite{ba2016layer}.

After $L$ layers, a final convolution aggregates information across scales with kernel $(M,1)$ and dilation factor one
\begin{equation}
\boldsymbol{\delta}(\boldsymbol{\Phi})
=
\mathcal{C}\!\left(\boldsymbol{X}^{(L)}, M, 1, 1\right)
\in \mathbb{R}^{N},
\end{equation}
where $\boldsymbol{\Phi}$ denotes the trainable parameters (see Fig.~\ref{fig:schema_bloc}).

This output corresponds to an estimate of the log-derivative of the deformation, denoted $\log_q \gamma'(t)$ in discrete form. To ensure identifiability and avoid trivial ambiguities such as global shifts, the output is constrained to have zero mean.

Given the estimated deformation, the de-warping operator defined in~\eqref{eq:W_y_delta} is applied to reconstruct an approximation of the latent stationary representation $\boldsymbol{W}_\mathrm{x}$. The network parameters are then learned by minimizing 
\begin{equation}
    \widehat{\boldsymbol{\Phi}}
    =
    \arg\min_{\boldsymbol{\Phi}}
    \mathcal{L}\left(\boldsymbol{\delta}(\boldsymbol{\Phi})\right),
\end{equation}
where $\mathcal{L}(\cdot)$ is defined in Section~\ref{subsec:Inverse Problem}.

\section{Results}
\label{sec:Results}

This section evaluates the proposed TWET framework\footnote{Code: https://github.com/CorentinPresvots/time-warping-estimation}.
TWET is evaluated on a synthetic dataset and compared with two reference approaches: WSB~\cite{Omer2017Time} and JEFAS~\cite{Meynard2018Spectral}\footnote{Code available at https://github.com/AdMeynard/JEFAS} described in Section~\ref{subsec:Synthetic Experiments}. The method of Clerc and Mallat~\cite{Clerc2003Estimating} is not considered due to its limited robustness to noise, while JEFAS-S~\cite{Meynard2023Synthesis} is excluded because of its high computational complexity for long signals. Since the ground-truth deformation derivative $\boldsymbol{\gamma}'$ is available, performance is evaluated by comparing the estimated deformation derivative $\widehat{\boldsymbol{\gamma}}'$ with $\boldsymbol{\gamma}'$.

%The methods are then evaluated on real-world signals in Section~\ref{subsec:Real-World Experiments}, where only the observation $\boldsymbol{y}$ is available. In this setting, performance is assessed through the stationarity of the estimated de-warped signal $\widehat{\boldsymbol{x}}$, obtained by applying the inverse warping operator associated with the estimated deformation and evaluated using~\eqref{eq:stat_score}.

The methods are then evaluated on real-world signals in Section~\ref{subsec:Real-World Experiments}, where only $\boldsymbol{y}$ is available and the framework is optimized on a signal-specific basis. Performance is assessed through the stationarity of the estimated de-warped signal $\widehat{\boldsymbol{x}}$ using~\eqref{eq:stat_score}; with representative training data, the same formulation can instead learn a deformation model for signals from the same application domain.

\subsection{Synthetic Experiments}
\label{subsec:Synthetic Experiments}

For TWET training, $500$ synthetic signals are generated for training, $50$ for validation, and $50$ for testing. All methods are evaluated on the same $50$ test signals. For the proposed architecture, the convolution kernels are set to $k_m = k_n = 3$, the number of convolutional layers is fixed to $L = 4$, and the number of filters per layer is set to $C = 16$.

Each signal is generated from a stationary process $\boldsymbol{x}$ and a strictly positive time-warping derivative $\boldsymbol{\gamma}'$. The deformation is obtained by temporally smoothing Gaussian white noise using a moving-average filter of random length $F_T \in [1,100]$, followed by a shift and rescaling enforcing unit mean, strict positivity, and standard deviation $\sigma_\gamma \in [0,0.5]$.

The latent stationary signal $\boldsymbol{x}$ is generated as a sum of Gaussian narrowband processes obtained by filtering white noise with Hann-shaped spectra of random bandwidths and center frequencies uniformly distributed in $[0,F_s/3]$. This construction produces signals exhibiting both spectral and amplitude modulations. The observed signal $\boldsymbol{y}$ is then obtained using the time-warping operator defined in~\eqref{eq:y(t)}. 

All signals are sampled at $F_s=44100$~Hz and segmented into $N=44100$-sample (1~s) windows. The deformation is estimated locally, with $N$ controlling the trade-off between temporal locality and latency: smaller $N$ improves locality but may miss the full deformation trajectory, whereas larger $N$ provides a more global estimate at the cost of increased latency.

Table~\ref{tab:comparison_methods} reports the Mean Squared Error (MSE) and Mean Absolute Error (MAE) between the estimated deformation derivative $\widehat{\boldsymbol{\gamma}}'$ and the ground truth $\boldsymbol{\gamma}'$, together with two stationarity indices
\begin{equation}
\label{eq:r_T}
r_T=100
\mathcal{T}\left(\boldsymbol{W}^{(\boldsymbol{\delta})}_\mathrm{y}\right),
\qquad
r_J =\mathcal{J}\left(\boldsymbol{W}^{(\boldsymbol{\delta})}_\mathrm{y}\right),
\end{equation}
where $\mathcal{T}(\cdot)$ is a stationary metrics introduced in~\eqref{eq:stat_score} and $\mathcal{J}(\cdot)$ is the negative log-likelihood  introduced in~\cite{Meynard2018Spectral}. Lower values indicate improved stationarity after de-warping. Average runtimes over the $50$ test signals are also reported in Table~\ref{tab:comparison_methods}. 

\begin{table}[h!]
\centering
\caption{Performance over 50 synthetic signals: deformation error, stationarity indices, and runtime.}
\label{tab:comparison_methods}
\begin{tabular}{lccc}
\hline
Method & TWET & JEFAS \cite{Meynard2018Spectral} & WSB \cite{Omer2017Time} \\
\hline
MSE $(\boldsymbol{\gamma}'-\widehat{\boldsymbol{\gamma}}')$ & 0.0070 & 0.0424 & 0.0564 \\
MAE $(\boldsymbol{\gamma}'-\widehat{\boldsymbol{\gamma}}')$ & 0.0525 & 0.1076 & 0.1938 \\
$r_T$ \eqref{eq:r_T} & 4.279 & 6.460 & 7.111 \\
$r_J$ \eqref{eq:r_T}  & -308.9 & -341.5 & -286.2 \\
Time per signal (s) & 0.0008 & 13.5 & 1.7 \\
\hline
\end{tabular}
\end{table}

TWET achieves the best MSE and MAE performance for the reconstruction of the deformation derivative $\widehat{\boldsymbol{\gamma}}'$ compared with existing approaches. Part of this improvement may be explained by the similarity between the training and test distributions. In contrast, JEFAS yields lower average reconstruction accuracy, as the method may fail to converge when the deformation derivative $\boldsymbol{\gamma}'$ varies rapidly. Nevertheless, JEFAS achieves better performance according to the stationarity criterion $r_J$. This behavior can be explained by the fact that JEFAS is explicitly designed to minimize the criterion $r_J$. TWET also provides a substantial computational advantage, reaching real-time compatibility with an average runtime of $0.0008$~s per signal. Although TWET results are obtained using batched GPU processing, both JEFAS and WSB remain significantly more computationally expensive.

%TWET achieves the best performance across on MSE and MAE sur la qualité de reconstruction de $\widehat{\boldsymbol{\gamma}}'$ compared to literature. Part of this improvement can be explained by the similarity between the training and test distributions. In contrast, JEFAS yields lower average performance because it may fail to converge when the deformation derivative $\boldsymbol{\gamma}'$ varies rapidly. On remarque néanmoins que sur le critère $r_T$ JEFAS méthode donne de meilleurs performances, cela est dû au fait que JEFAS est optimisé pour minimiser $r_J$. TWET also provides a substantial computational advantage, reaching real-time compatibility with an average runtime of $0.0008$~s per signal. Although TWET results are obtained using batched GPU processing, both JEFAS and WSB remain significantly more computationally expensive.

\subsection{Real-World Experiments}
\label{subsec:Real-World Experiments}

The proposed method is evaluated on real-world audio signals, including the sound of a car accelerating with gear shifts (Fig.~\ref{fig:real_examples0}), a recording of singing (Fig.~\ref{fig:real_examples1}), and the sound of wind blowing through a window (Fig.~\ref{fig:real_examples2}).

Since no ground-truth deformation is available, the comparison relies on qualitative analysis together with the stationarity measures defined in~\eqref{eq:r_T}. TWET is trained directly on each signal, as the amount of available real-world data is currently insufficient to train a fully generalizable model. In this setting, the proposed framework can therefore be interpreted as a signal-adaptive parametric estimator.

Fig.~\ref{fig:real_examples0}--\ref{fig:real_examples2} illustrate examples. Each figure shows the wavelet transform of the observed signal (top left), the TWET de-warped representation (top right), and the deformation derivatives estimated by WSB, JEFAS, and TWET (bottom).

\begin{figure}[!ht]
\centering
\includegraphics[height=0.36\textwidth,width=0.49\textwidth]{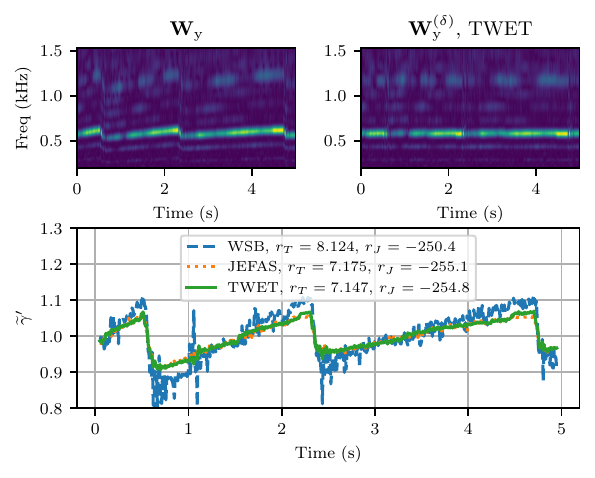}
\vspace{-1.0cm}
\caption{Car acceleration. Top left: wavelet transform $\boldsymbol{W}_\mathrm{y}$ of the observed signal $\boldsymbol{y}$. Top right: estimated de-warped representation obtained with TWET. Bottom: estimated deformation derivatives obtained with WSB, JEFAS, and TWET.}
\label{fig:real_examples0}
\end{figure}

\begin{figure}[!ht]
\centering
\includegraphics[height=0.36\textwidth,width=0.49\textwidth]{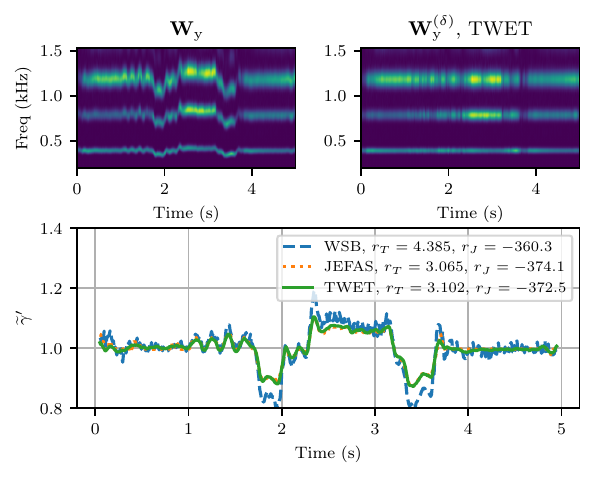}
\vspace{-1.0cm}
\caption{Singing voice recording: same layout as Fig.~\ref{fig:real_examples0}.}
\label{fig:real_examples1}
\end{figure}

\begin{figure}[!ht]
\centering
\includegraphics[height=0.36\textwidth,width=0.49\textwidth]{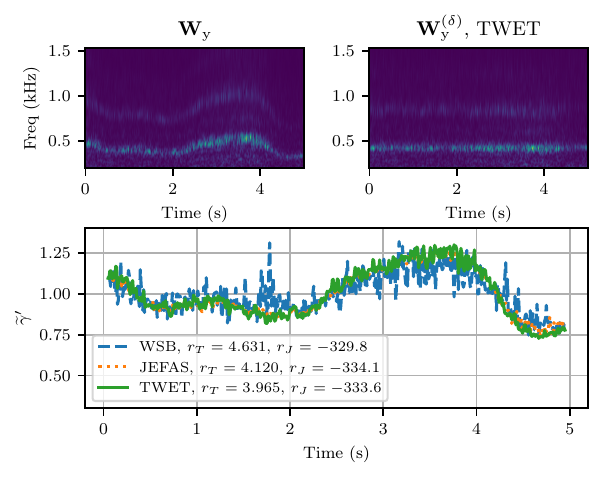}
\vspace{-1.0cm}
\caption{Wind blowing: same layout as Fig.~\ref{fig:real_examples0}.}
\label{fig:real_examples2}
\end{figure}

Overall, TWET slightly outperforms JEFAS and significantly improves over WSB. In particular, WSB appears more sensitive to noise and amplitude variations, resulting in considerably noisier deformation estimates. %On the examples of Fig.~\ref{fig:real_examples0}--\ref{fig:real_examples2}, the criteria $r_T$ and $r_J$ may favor different methods, which is expected since each approach is optimized according to its own objective function: TWET minimizes~\eqref{eq:stat_score}, whereas JEFAS relies on the likelihood formulation introduced in~\cite{Meynard2018Spectral}.
Although TWET demonstrates strong empirical performance, its generalization capability remains dependent on the diversity and representativeness of the training data.

\section{Conclusion}
\label{sec:Conclusion}

This paper introduced TWET, a trainable scheme for time-warping estimation based on time-scale representations and a differentiable stationarity criterion. The problem is formulated as an inverse problem in the wavelet domain and solved using a hierarchical dilated convolutional architecture that estimates time-warping functions from a single observation.

Experimental results on synthetic signals demonstrated that TWET achieves lower reconstruction errors and improved stationarity restoration than reference methods. In particular, the proposed framework remains effective for rapidly varying time-warping deformations, a regime where methods such as JEFAS tend to fail or produce unstable estimates. In addition, TWET significantly reduces computation time for deformation estimation, making it compatible with low-latency applications. Experiments on real-world signals further highlighted the robustness of the proposed method to noise and amplitude variations.

Future work includes joint estimation of amplitude modulation and time deformation, as well as incorporating multiple wavelet resolutions to better handle deformation dynamics ranging from slow to rapid variations.

\bibliographystyle{IEEEbib}
\bibliography{strings,refs}

\end{document}